\definecolor{mygreen}{RGB}{70,102,94}
\definecolor{myyellow}{RGB}{234,178,0}
\definecolor{myper}{RGB}{128,82,110}
\definecolor{myred}{RGB}{192,0,0}
\definecolor{myblue}{RGB}{216,223,224}
\definecolor{mygreenshade}{RGB}{235,243,228}
\definecolor{myredshade}{RGB}{250,237,233}
\definecolor{myyellowshade}{RGB}{254,248,232}
\newtheorem{definition}{Definition}
\newtheorem{lemma}{Lemma}
\newtheorem{proof}{Proof}
\title{Causal Discovery from Time-Series Data with Short-Term Invariance-Based Convolutional Neural Networks}
\author{
    Rujia Shen \\
	Faculty of Computing\\ Harbin Institute of Technology\\ Harbin, Heilongjiang, China\\
	\texttt{shenrujia@stu.hit.edu.cn} \\
    \And
    Boran Wang\\
    The Artificial Intelligence Institute\\ Harbin Institute of Technology\\ Shenzhen, Guangdong, China\\
	\texttt{wangboran@hit.edu.cn} \\
    \And
    Chao Zhao\\
    The Department of Computer Science\\ The University of North Carolina at Chapel Hill\\ North Carolina, USA\\
	\texttt{zhaochao@cs.unc.edu} \\
    \And
    Yi Guan \\
	Faculty of Computing\\ Harbin Institute of Technology\\ Harbin, Heilongjiang, China\\
	\texttt{guanyi@hit.edu.cn}\\
    \And
    Jingchi Jiang\\
    The Artificial Intelligence Institute\\ Harbin Institute of Technology\\ Harbin, Heilongjiang, China\\
	\texttt{jiangjingchi@hit.edu.cn} \\
}
\begin{document}
\maketitle

\begin{abstract}
	Causal discovery from time-series data aims to capture both intra-slice (contemporaneous) and inter-slice (time-lagged) causality between variables within the temporal chain, which is crucial for various scientific disciplines. Compared to causal discovery from non-time-series data, causal discovery from time-series data necessitates more serialized samples with a larger amount of observed time steps. To address the challenges, we propose a novel gradient-based causal discovery approach STIC, which focuses on \textbf{S}hort-\textbf{T}erm \textbf{I}nvariance using \textbf{C}onvolutional neural networks to uncover the causal relationships from time-series data. Specifically, STIC leverages both the short-term time and mechanism invariance of causality within each window observation, which possesses the property of independence, to enhance sample efficiency. Furthermore, we construct two causal convolution kernels, which correspond to the short-term time and mechanism invariance respectively, to estimate the window causal graph. To demonstrate the necessity of convolutional neural networks for causal discovery from time-series data, we theoretically derive the equivalence between convolution and the underlying generative principle of time-series data under the assumption that the additive noise model is identifiable. Experimental evaluations conducted on both synthetic and FMRI benchmark datasets demonstrate that our STIC outperforms baselines significantly and achieves the state-of-the-art performance, particularly when the datasets contain a limited number of observed time steps. Code is available at \url{https://github.com/HITshenrj/STIC}.
\end{abstract}

\keywords{Causal discovery \and Time-series data \and Time invariance \and Mechanism invariance \and Convolutional neural networks}

\section{Introduction}

Causality behind time-series data plays a significant role in various aspects of everyday life and scientific inquiry. Questions like ``What factors in the past have led to the current rise in blood glucose?" or ``How long will my headache be alleviated if I take that pill?" require an understanding of the relationships among observed variables, such as the relation between people's health status and their medical interventions \citep{cowls2015causation,pawlowski2020deep}. People usually expect to find cyclical and invariant principles in a changing world, which we call causal relationships \citep{chan2024fuzzy,entner2010causal}. These relationships can be represented as a directed acyclic graph (DAG), where nodes represent observed variables and edges represent causal relationships between variables with time lags. This underlying graph structure forms the factual foundation for causal reasoning and is essential for addressing such queries \citep{pearl2009causality}. 

Current causal discovery approaches utilize intra-slice and inter-slice information of time-series data, leveraging techniques such as conditional independence, smooth score functions, and auto-regression. These methods can be broadly classified into three categories: Constraint-based methods \citep{entner2010causal, runge2019detecting, runge2020discovering}, Score-based methods \citep{pamfil2020dynotears}, and Granger-based methods \citep{nauta2019causal,cheng2022cuts,cheng2023cuts+}.\textbf{Constraint-based methods} rely on conditional independence tests to infer causal relationships between variables. These methods perform independence tests between pairs of variables under different conditional sets to determine whether a causal relation exists. However, due to the difficulty of sampling, real-world data often suffers from the limited length of observed time steps, making it challenging for statistical conditional independence tests to fully capture causal relationships \citep{zhang2011kernel, zhang2023extending}. Additionally, these methods often rely on strong yet unrealistic assumptions, such as Gaussian noise, when searching for statistical conditional independence \citep{spirtes2016causal, wang2017efficient}. \textbf{Score-based methods} regard causal discovery as a constrained optimization problem using augmented Lagrangian procedures. They assign a score function that captures properties of the causal graph, such as acyclicity, and minimize the score function to identify potential causal graphs. While these methods offer simplicity in optimization, they relying heavily on acyclicity regularization and often lack guarantees for finding the correct causal graph, potentially leading to suboptimal solutions \citep{varando2020learning, lippe2021efficient, zhang2023boosting}. \textbf{Granger-based methods}, inspired by \citep{granger1969investigating,granger2015spectral}, offer an intriguing perspective on causal discovery. These methods utilize auto-regression algorithms under the assumption of additive noise to assess if one time series can predict another, thereby identifying causal relationships. However, they tend to exhibit lower precision when working with limited observed time steps.

To overcome the limitations of existing approaches, such as low sample efficiency in constraint-based methods, suboptimal solutions from acyclicity regularizers in score-based methods and low precision when limited observed time steps in Granger-based methods, we propose a novel \textbf{S}hort-\textbf{T}erm \textbf{I}nvariance-based \textbf{C}onvolutional causal discovery approach (\textbf{STIC}). STIC leverages the properties of short-term invariance to enhance the sample efficiency and accuracy of causal discovery.
More concretely, by sliding a window along the entire time-series data, STIC constructs batches of window observations that possess invariant characteristics and improves sample utilization. Unlike existing score-based methods, our model does not rely on predefined acyclicity constraints to avoid local optimization. As the window observations move along the temporal chain, the structure of the window causal graph exhibits periodic patterns, demonstrating short-term time invariance. Simultaneously, the conditional probabilities of causal effects between variables remain unchanged as the window observations slide, indicating short-term mechanism invariance. 
The contributions of our work can be summarized as follows:

\begin{itemize}
  \item We propose \textbf{STIC}, the \textbf{S}hort-\textbf{T}erm \textbf{I}nvariance-based \textbf{C}onvolutional causal discovery approach, which leverages the properties of short-term invariance to enhance the sample efficiency and accuracy of causal discovery.
  \item  STIC uses the time-invariance block to capture the causal relationships among variables, while employing the mechanism-invariance block for the transform function.
  \item  To dynamically capture the contemporaneous and time-lagged causal structures of the observed variables, we establish the equivalence between the convolution of the space-domain (contemporaneous) and time-domain (time-lagged) components, and the multivariate Fourier transform (the underlying generative mechanism) of time-series data.
  \item We conduct experiments to evaluate the performance of STIC on synthetic and benchmark datasets. The experimental results show that STIC achieves the state-of-the-art results on synthetic time-series datasets, even when dealing with relatively limited observed time steps. Experiments demonstrate that our approach outperforms baseline methods in causal discovery from time-series data.
\end{itemize}

\section{Background \label{Background}}

In this section, we introduce the background of causal discovery from time-series data. Firstly, we show all symbols and their definitions in Section \ref{Symbol_Summarization}. Secondy, in Section \ref{Problem_Definition}, we present the problem definition and formal representation of window causal graph. Thirdly, in Section \ref{Causal_Invariance}, we introduce the concepts of short-term time invariance and mechanism invariance. Building upon these concepts, we derive an independence property specific to window causal graph. Fourthly, in Section \ref{Necessity_of_convolution}, we delve into the theoretical aspects of our approach. Specifically, we establish the equivalence between the convolution operation and the underlying generative mechanism of the observed time-series data. This theoretical grounding provides a solid basis for the proposed STIC approach. Finally, in Section \ref{Granger_Causality}, we introduce Granger causality, an auto-regressive approach to causal discovery from time-series data.

\begin{figure}[!t]
  \centering
  \includegraphics[width=0.8\linewidth]{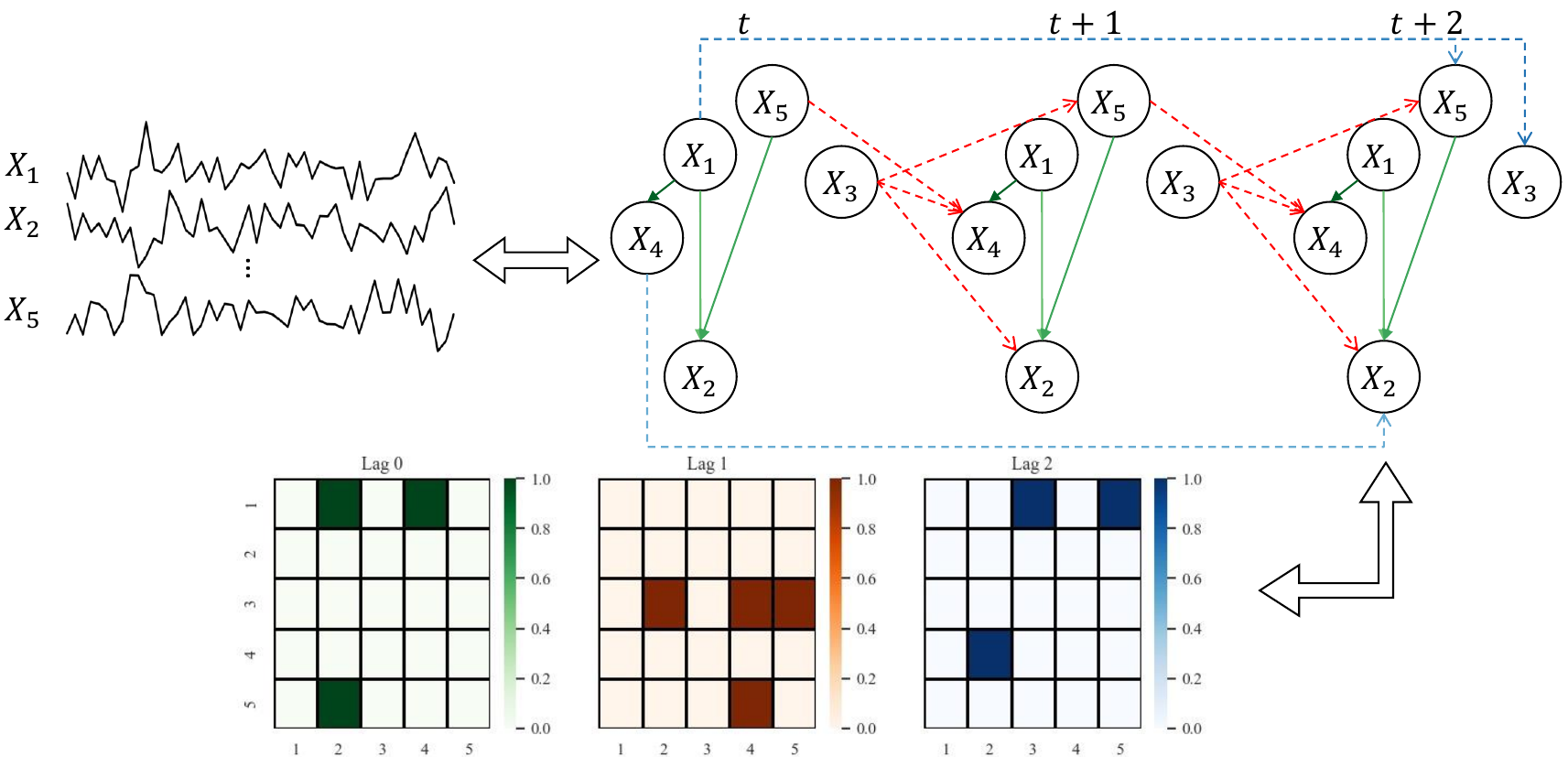}
  \caption{A example showing the correspondence among the given observed variables, the underlying window causal graph, and the window causal matrix. The observed dataset consists of $d=5$ observed variables, and the true maximum lag $\overline{\tau}$ is $2$. In $\mathcal{W}\in \mathbb{R}^{5\times 5\times 3}$, for each $\mathcal{W}_{i,j}^{\tau}$ represents the causal effect of $X_i$ on $X_j$ with $\tau$ time lags. For example, the blue lines in window causal graph indicate the following three causal effects with time lags $\tau=2$ at any time step $t$, i.e. $\mathcal{W}_{1,3}^{2}=1\Rightarrow X_{1}\stackrel{2}{\longrightarrow} X_{3};\mathcal{W}_{1,5}^{2}=1\Rightarrow X_{1}\stackrel{2}{\longrightarrow} X_{5}; \mathcal{W}_{4,2}^{2}=1\Rightarrow X_{4}\stackrel{2}{\longrightarrow} X_{2}$. Moreover, the red lines indicate the causal relationships with time lags $\tau=1$, i.e. $\mathcal{W}_{3,2}^{1}=1\Rightarrow X_{3}\stackrel{1}{\longrightarrow} X_{2}; \mathcal{W}_{3,4}^{1}=1\Rightarrow X_{3}\stackrel{1}{\longrightarrow} X_{4};\mathcal{W}_{3,5}^{1}=1\Rightarrow X_{3}\stackrel{1}{\longrightarrow} X_{5}; \mathcal{W}_{5,4}^{1}=1\Rightarrow X_{5}\stackrel{1}{\longrightarrow} X_{4}$. Finally, the green lines represent contemporaneous causal relationships, i.e. $\mathcal{W}_{1,2}^{0}=1\Rightarrow X_{1}\stackrel{0}{\longrightarrow} X_{2}; \mathcal{W}_{1,4}^{0}=1\Rightarrow X_{1}\stackrel{0}{\longrightarrow} X_{4}; \mathcal{W}_{5,2}^{0}=1\Rightarrow X_{5}\stackrel{0}{\longrightarrow} X_{2}$.}
  \label{fig1}
\end{figure}

\subsection{Symbol Summarization\label{Symbol_Summarization}}

Firstly, to better represent the symbols used in Section \ref{Background}, we arrange a table to summarize and show their definitions, as shown in Table \ref{symbols_table}.

\begin{table}[!t]
  \centering
  \caption{Summary of symbol definitions in Section \ref{Background}.}
  \label{symbols_table}
  \resizebox{0.8\linewidth}{!}{%
  \begin{tabular}{c|c}
    \toprule[1pt]
  Symbol & Description \\ \midrule
  $d$ & The number of observed variables \\ \midrule
  $T$ & The length of observed time steps \\ \midrule
  $X_i^t$ & The observed value of the $i$-th variable at the $t$-th time step \\ \midrule
  $X_i=\{X_i^1, \cdots, X_i^T\}\in{\mathbb{R}^{T}}$ & The observed value of $i$-th variable within all $T$ time steps \\ \midrule
  $\mathcal{X}=\{X_1, \cdots, X_d\}\in{\mathbb{R}^{d\times T}}$ & The observed dataset \\ \midrule
  $\widetilde{\tau}$ & The maximum time lag \\ \midrule
  $\mathcal{G}$ & The underlying window causal graph \\ \midrule
  $\mathcal{V}=\{X_1,...,X_d\}$  & The nodes within the graph $\mathcal{G}$ \\ \midrule
  $\mathcal{E}$ & The contemporaneous and time-lagged relationships among nodes $\mathcal{V}$ \\ \midrule
  $\mathcal{W}\in \mathbb{R}^{d\times d\times (\widetilde{\tau}+1)}$ & The window causal matrix \\ \midrule
  $X_{i}\stackrel{\tau}{\longrightarrow} X_{j}$ & The causal relationship with $\tau$ lags between $X_i$ and $X_j$\\ \midrule
  $Pa_{t}^{\tau}(\cdot)$ & The set of parents of a variable with $\tau$ time lags at time step $t$\\ \midrule
  $Pa_{t}^{\cdot}(\cdot)$ & The set of parents of a variable with all time lags range from 0 to $\widetilde{\tau}$ at time step $t$\\ \midrule
  $\underset{t}{\perp \!\!\! \perp}^{\tau}$ & The conditional independence with $\tau$ time lags at time step $t$ \\ \midrule
  $Pa_{\mathcal{G}}(\mathcal{X})$  & The relationships among $\mathcal{X}$ in the window causal graph $\mathcal{G}$\\ \midrule
  $E$ & The noise term\\ \midrule
  $f$ &  The underlying functions among $\mathcal{X}$\\ \midrule
  $\mathcal{F}(\mathcal{X})$ & The multivariate Fourier transform of $\mathcal{X}$ \\ \midrule
  $\omega$ & The angular frequency\\ \midrule
  $\hat{f},h,g$ & The functions in intermediate processes\\ \midrule
  $*$ & The convolution operation\\ \midrule
  $\propto$ & The directly proportional relationship\\ \midrule
  $\sigma_{\tau}^2(X_i|\mathcal{X})$ & The variance of predicting $X_i$ using $\mathcal{X}$ with $\tau$ time lags\\ 
   \bottomrule[1pt]
  \end{tabular}%
  }
\end{table}

\subsection{Problem Definition\label{Problem_Definition}}

Let an observed dataset denoted as $\mathcal{X}=\{X_1, \cdots, X_d\}\in{\mathbb{R}^{d\times T}}$, which consists of $d$ observed continuous time-series variables. Each variable $X_i$ is represented as a time sequence $X_i=\{X_i^1, \cdots, X_i^T\}$ with the length of $T$. Here, each $X_i^t$ corresponds to the observed value of the $i$-th variable $X_i$ at the $t$-th time step. Unlike graph embedding algorithms \citep{cheng2020time2graph,cheng2021time2graph+} which aims to learn time series representations, the objective of causal discovery is to uncover the underlying structure within time-series data, which represents boolean relationships between observed variables. Furthermore, following the Consistency Throughout Time assumption \citep{spirtes2000causation, zhang2002strong, robins2003uniform, kalisch2007estimating, entner2010causal, assaad2022survey}, the objective of causal discovery from time-series data is to uncover the underlying window causal graph $\mathcal{G}$ as an invariant causal structure. The true window causal graph for $\mathcal{X}$ encompasses both intra-slice causality with 0 time lags and inter-slice causality with time lags ranging from $1$ to $\widetilde{\tau}$. Here, $\widetilde{\tau}$ denotes the maximum time lag. Mathematically, the window causal graph is defined as a finite Directed Acyclic Graph (DAG) denoted by $\mathcal{G}=(\mathcal{V},\mathcal{E})$. The set $\mathcal{V}=\{X_1,...,X_d\}$ represents the nodes within the graph $\mathcal{G}$, wherein each node corresponds to an observed variable $X_i$. The set $\mathcal{E}$ represents the contemporaneous and time-lagged relationships among these nodes, encompassing all $2^{(\widetilde{\tau}+1)\times d}$ possible combinations. The window causal graph is often represented by the window causal matrix, which is defined as follows.

\begin{definition}[Window Causal Matrix\label{window_causal_Matrix}]
  The window causal graph $\mathcal{G}$, which captures both contemporaneous and time-lagged causality, can be effectively represented using a three-dimensional boolean matrix $\mathcal{W}\in \mathbb{R}^{d\times d\times (\widetilde{\tau}+1)}$. Each entry $\mathcal{W}_{i,j}^{\tau}$ in the boolean matrix corresponds to the causal relationship between variables $X_i$ and $X_j$ with $\tau$ time lags. To be more specific, if $\mathcal{W}_{i,j}^{\tau=0}= 1$, it signifies the presence of an intra-slice causal relationship between $X_i$ and $X_j$, meaning they influence each other at the same time step. On the other hand, if $\mathcal{W}_{i,j}^{\tau>0}= 1$, it indicates that $X_i$ causally affects $X_j$ with $\tau$ time lags.
\end{definition}

Figure \ref{fig1} provides a visual example of a window causal graph along with its corresponding matrix defined in Definition \ref{window_causal_Matrix}. As shown in Figure \ref{fig1}, the time-series causal relationships of the form $X_{i}\stackrel{\tau}{\longrightarrow} X_{j}$ can be represented as $\mathcal{W}_{i,j}^{\tau}=1$. Conversely, $\mathcal{W}_{i,j}^{\tau}=1$ in the boolean matrix indicates that the value $X_{i}^{t}$ at any time step $t$ influences the value $X_{j}^{t+\tau}$ with $\tau$ time lags later. 

\subsection{Short-Term Causal Invariance\label{Causal_Invariance}}

There has been an assertion that causal relationships typically exhibit short-term time and mechanism invariance across extensive time scales \citep{entner2010causal,liu2023causal,zhang2017causal}. These two aspects of invariance are commonly regarded as fundamental assumptions of causal invariance in causal discovery from time-series data. In the following, we will present the definitions for these two forms of invariance.

\begin{definition}[Short-Term Time Invariance\label{time_invariance}]
  Given $\mathcal{X}\in {\mathbb{R}^{d\times T}}$, for any $X_{i},X_{j},\tau\ge 0$, if $X_{i}\in Pa_{t}^{\tau}(X_{j})$ at time $t$, then there exists $X_{i}\in Pa_{t'}^{\tau}(X_{j})$ at time $t'\neq t$ in a short period of time, where $Pa_{t}^{\tau}(\cdot)$ denotes the set of parents of a variable with $\tau$ time lags at time step $t$.
\end{definition}

Short-term time invariance refers to the stability of parent-child relationships over time. In other words, it implies that the dependencies between variables remain consistent regardless of specific time points. For instance, considering Figure \ref{fig1}: $X_{5}$ is a parent of $X_{4}$ with time lag $\tau=1$ at $t$, then $X_{5}$ will also be a parent of $X_{4}$ with time lag $\tau=1$ at $t'=t+1$; similarly, when $\tau=0$, if $X_{5}$ is a parent of $X_{2}$ at $t$, then $X_{5}$ will be a parent of $X_{2}$ at no matter $t'=t+1$ or $t'=t+2$.

\begin{definition}[Short-Term Mechanism Invariance\label{mechanism_invariance}]
  For any $X_{i}$, the conditional probability distribution $P(X_{i}|Pa_{t}^{\cdot}(X_{i}))$ remains constant across the short-term temporal chain. In other words, for any time step $t$ and $t'$, it holds that $P(X_{i}|Pa_{t}^{\cdot}(X_{i}))=P(X_{i}|Pa_{t'}^{\cdot}(X_{i}))$, where $Pa_{t}^{\cdot}(X_i)$ means the set of parents of $X_i$ with all time lags range from 0 to $\widetilde{\tau}$ at time step $t$.
\end{definition}

In particular, based on Definition \ref{mechanism_invariance}, short-term mechanism invariance implies that conditional probability distributions remain constant over time. For instance, in Figure \ref{fig1}, we have $Pa_{t}^{\cdot}(X_{2})=\{X_{3},X_{1},X_{5}\}=Pa_{t+1}^{\cdot}(X_{2})$. Then, we have $P(X_{2}|Pa_{t}^{\cdot}(X_{2}))=P(X_{2}|Pa_{t+1}^{\cdot}(X_{2}))$

Building upon the definitions of short-term time invariance and mechanism invariance, we can derive the following lemma, which characterizes the invariant nature of independence among variables. Inspired by causal invariance \citep{entner2010causal}, we further provide a detailed proof procedure as outlined below.

\begin{lemma}[Independence Property\label{Independence_Invariance}]
  Given $\mathcal{X}\in{\mathbb{R}^{d\times T}}$ be the observed dataset. 
  If we have $X_i\underset{t}{\perp \!\!\! \perp}^{\tau} X_j | X_k,...,X_l$, then we have $X_i\underset{t'}{\perp \!\!\! \perp}^{\tau} X_j | X_k,...,X_l$. $\underset{t}{\perp \!\!\! \perp}^{\tau}$ means conditional independence with $\tau$ time lags at time step $t$.
\end{lemma}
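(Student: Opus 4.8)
The plan is to reduce the invariance of conditional independence to the invariance of the windowed joint distribution, and to obtain the latter from the two structural assumptions (Definitions \ref{time_invariance} and \ref{mechanism_invariance}) together with the causal Markov factorization of the data. The starting observation is that a statement $X_i \underset{t}{\perp \!\!\! \perp}^{\tau} X_j \mid X_k,\dots,X_l$ is, by definition, a statement purely about the joint law of the finite window $(X^{t},\dots,X^{t+\tau})$: once each conditioning variable is assigned its appropriate relative lag, the statement is exactly the equation $P(X_i^{t+\tau}\mid X_j^{t},X_k,\dots,X_l)=P(X_i^{t+\tau}\mid X_k,\dots,X_l)$. Hence, if the windowed joint law at $t$ coincides with the windowed joint law at the shifted time $t'$, the implication follows at once, since both sides of the independence equation are then computed from identical distributions.

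The core of the argument is therefore to prove this short-term shift-invariance of the windowed law. First I would write the joint distribution of a window as the causal Markov factorization induced by $\mathcal{G}$, namely a product over nodes $m$ of the local mechanisms $P(X_m \mid Pa_t^{\cdot}(X_m))$. Definition \ref{time_invariance} then supplies the structural half: the parent set of every node, viewed as a collection of (variable, lag) pairs, is the same at $t$ and at $t'$, so the two factorizations share the identical factor structure. Definition \ref{mechanism_invariance} supplies the quantitative half: each local factor satisfies $P(X_m \mid Pa_t^{\cdot}(X_m))=P(X_m \mid Pa_{t'}^{\cdot}(X_m))$. Multiplying the factors shows the windowed joint law at $t'$ equals that at $t$. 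Substituting this equality into the definition of $\underset{t}{\perp \!\!\! \perp}^{\tau}$ makes the independence equation holding at $t$ literally the same equation holding at $t'$, which yields $X_i \underset{t'}{\perp \!\!\! \perp}^{\tau} X_j \mid X_k,\dots,X_l$.

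I expect the main obstacle to be the index bookkeeping in the factorization step rather than any deep difficulty. I must assign the correct lag to every variable in both the window and the conditioning set, and verify that the shift $t\to t'$ relabels all indices uniformly so that the two windows are genuine copies of one another; only then do the two factorizations match term by term. I also need to respect that ``short period of time'' in Definitions \ref{time_invariance} and \ref{mechanism_invariance} confines $t'$ to the neighbourhood of $t$ where both assumptions hold, so the lemma should be read with $t'$ in that short-term range, and the argument makes no claim of global stationarity beyond it.
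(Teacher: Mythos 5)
Your proof is correct in substance but takes a genuinely different route from the paper's. The paper argues at the level of the graph: it observes that the two invariance definitions let one identify the values $X_i^t$ and $X_i^{t'}$ with the same node $X_i$ of the window causal graph $\mathcal{G}$, so that $Pa_{t}^{\tau}(X_i)=Pa_{t'}^{\tau}(X_i)$, and then passes from $X_i\underset{t}{\perp \!\!\! \perp}^{\tau} X_j \mid X_k,\dots,X_l$ up to a graphical independence $X_i\underset{\mathcal{G}}{\perp \!\!\! \perp}^{\tau} X_j \mid X_k,\dots,X_l$ and back down to time $t'$. That detour implicitly uses both directions of the Markov--faithfulness correspondence: lifting a distributional independence at $t$ to an independence ``in $\mathcal{G}$'' needs faithfulness, and projecting it back down to $t'$ needs the Markov condition; neither is made explicit in the paper. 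You bypass the graph entirely: you show the windowed joint law is shift-invariant by matching the causal Markov factorizations factor by factor (Definition \ref{time_invariance} gives identical factor structure as sets of variable--lag pairs, Definition \ref{mechanism_invariance} gives identical factors), and then transfer the conditional-independence equation as a purely distributional identity. This is more elementary and dispenses with faithfulness, which is a genuine improvement over the paper's argument. The one point to tighten is the marginalization step: the product of local mechanisms yields the law of the window conditional on whatever parents fall before the window's left edge, so to conclude that the unconditional windowed law at $t$ equals that at $t'$ you also need the marginal of that boundary segment to be shift-invariant over the short-term range in which both definitions apply. The paper's proof glosses over the same issue, so this does not put your argument behind it, but it deserves an explicit sentence.
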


\begin{proof}
  Due to the short-term time invariance of the relationships among variables and the short-term mechanism invariance of conditional probabilities, different value $X_i^t$ and $X_i^{t'}$ of $X_i$ is mapped to the same variable $X_i$ in the window causal graph $\mathcal{G}$. Consequently, $Pa_{t}^{\tau}(X_i)$ and $Pa_{t'}^{\tau}(X_i)$ correspond to the same variable set. Thus, if the condition $X_i\underset{t}{\perp \!\!\! \perp}^{\tau} X_j | X_k,...,X_l$ holds, then $X_i\underset{\mathcal{G}}{\perp \!\!\! \perp}^{\tau} X_j | X_k,...,X_l$ holds in the window causal graph $\mathcal{G}$, which further implies $X_i\underset{t'}{\perp \!\!\! \perp}^{\tau} X_j | X_k,...,X_l$.
\end{proof}

This lemma establishes that, in an identifiable window causal graph, the independence property remains invariant with time translation. Leveraging this insight, we can transform the observed time series into window observations to perform causal discovery while maintaining the invariance conditions, as outlined in Section \ref{Window_Representation}.

\subsection{Necessity of Convolution\label{Necessity_of_convolution}}

Granger demonstrated, through the Cramer representation and the spectral representation of the covariance sequence \citep{granger1969investigating,mills2013granger,granger2015spectral}, that time-series data can be decomposed into a sum of uncorrelated components. Inspired by these representations and the concept of graph Fourier transform \citep{shuman2013emerging,sandryhaila2013discrete,sardellitti2017graph}, we propose considering a underlying function $\mathcal{X}=f(Pa_{\mathcal{G}}(\mathcal{X}),\mathcal{W})+E$, where $Pa_{\mathcal{G}}(\mathcal{X})$ denotes relationships among $\mathcal{X}$ in the window causal graph $\mathcal{G}$ and $E$ is the noise term, to describe the generative process of the observed dataset $\mathcal{X}=\{X_1, \cdots, X_d\}\in{\mathbb{R}^{d\times T}}$, with an underlying window causal matrix $\mathcal{W}\in \mathbb{R}^{d\times d\times (\widetilde{\tau}+1)}$. We can then decompose $f(Pa_{\mathcal{G}}(\mathcal{X}),\mathcal{W})$ into Fourier integral forms:

\begin{equation}
  \begin{aligned}
    \mathcal{X} & = f(Pa_{\mathcal{G}}(\mathcal{X}),\mathcal{W})+E             \\
      & = \hat{f}(s,t)+E
  \end{aligned}
  \label{E2_0}
\end{equation}

\begin{figure*}[ht]
  \centering
  \includegraphics[width=1\linewidth]{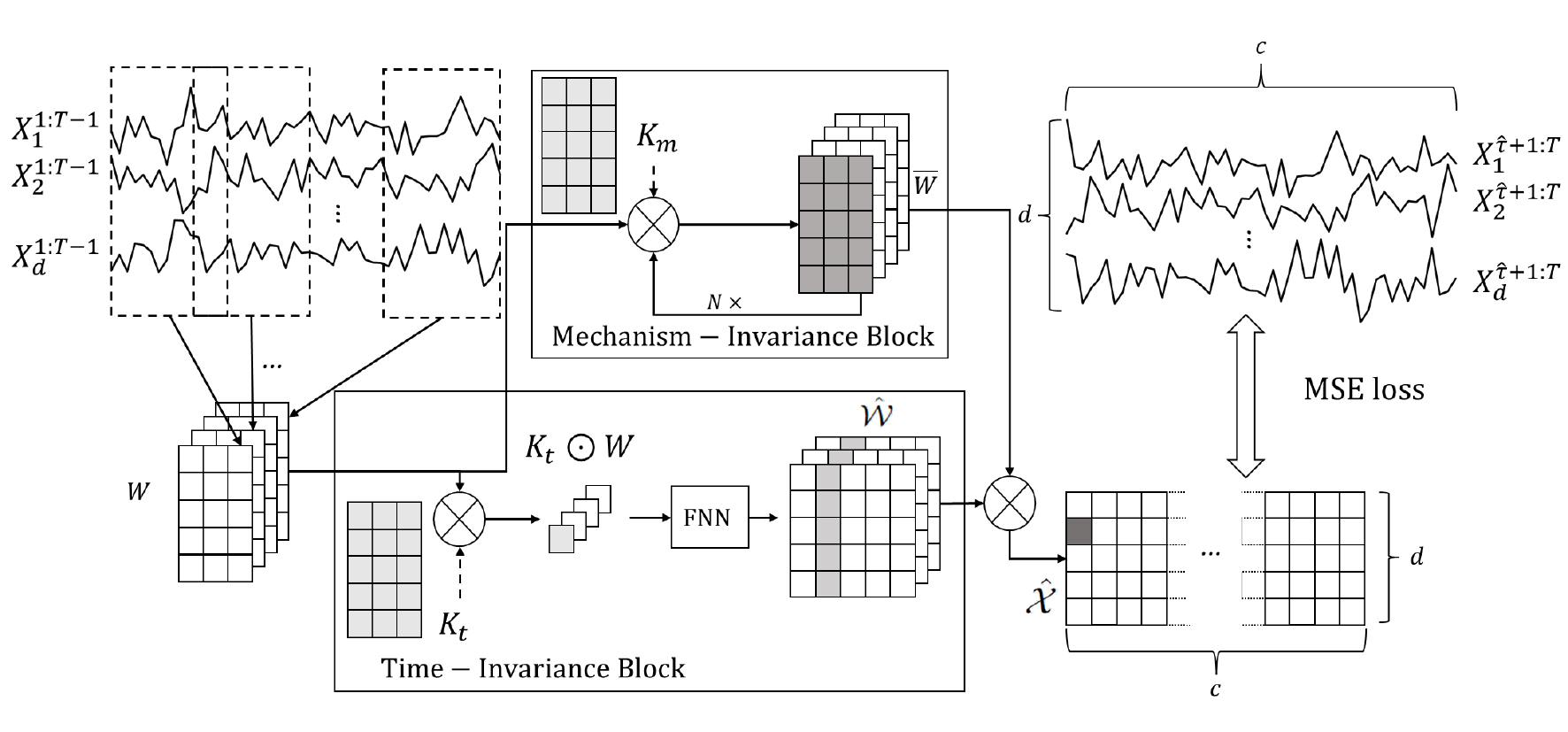}
  \caption{An illustration of the STIC framework. Let $\mathcal{X}=\{X_1, \cdots, X_d\}\in\mathbb{R}^{d\times T}$ be the observed dataset, representing $d$ observed continuous time series of the same length $T$. First, we convert the observations of the first $T-1$ time steps, $\mathcal{X}^{1:T-1}=\{X_1^{1:T-1}, \cdots, X_d^{1:T-1}\}\in\mathbb{R}^{d\times (T-1)}$, into a window representation $W\in\mathbb{R}^{d\times\hat{\tau}\times c}$ using a sliding window with a predefined window length $\hat{\tau}$ and step length 1, where $c=T-\hat{\tau}$.
  \textbf{Time-Invariance Block} ($B_t$): In order to better discover the causal structure from $\mathcal{X}$, we use convolution kernel $K_t \in \mathbb{R}^{d\times\hat{\tau}}$ to act on $W$, and get the common representation $K_t\odot W_\psi$ of $\mathcal{X}$ for each window observations $W_\psi$. Afterwards, we pass the commonality through an FNN network to obtain a predicted window causal matrix $\hat{\mathcal{W}}\in\mathbb{R}^{d\times d\times\hat{\tau}}$. 
  \textbf{Mechanism-Invariance Block} ($B_m$): To identify numerical transform in window causal graph, we use another convolution kernel $K_m\in\mathbb{R}^{d\times\hat{\tau}}$ in each $B_m$ to transform $W$. Then we output $\overline{W}\in\mathbb{R}^{d\times\hat{\tau}\times c}$ as the prediction of $f(W)$. 
  Next, we do hadamard product of each $\overline{W}_{\psi}^\tau \in\mathbb{R}^{d}$ in $\overline{W}$ and each $\hat{\mathcal{W}}^\tau \in\mathbb{R}^{d\times d}$ in $\hat{\mathcal{W}}$ to get the predicted $\hat{\mathcal{X}}^{\hat{\tau}+\psi}$ until we get all $\hat{\mathcal{X}}\in\mathbb{R}^{d\times c}$. Finally, we calculate the Mean Squared Error (MSE) loss between $\hat{\mathcal{X}}$ and $\mathcal{X}$, and adopt gradient descent to optimize the parameters within the time-invariance and mechanism-invariance blocks.
  }
  \label{fig2}
\end{figure*}

Here, $s$ and $t$ denote the spatial and temporal projections, respectively, of $f(Pa_{\mathcal{G}}(\mathcal{X}),\mathcal{W})$. Equation \ref{E2_0} is derived from the observation that the contemporaneous part in time-series data corresponds to the spatial domain, while the time-lagged part corresponds to the temporal domain. Therefore, we employ the multivariate Fourier transform,

\begin{equation}
  \begin{aligned}
    \mathcal{F}(\mathcal{X}) & = \iint_{-\infty}^{\infty} \hat{f}(x,y;s,t)e^{-i\omega (sx+ty)}dxdy                                 \\
                   & \propto \iint_{-\infty}^{\infty} h(\hat{s})g(\hat{t})e^{-i\omega (\hat{s}+\hat{t})}d\hat{s}d\hat{t}
  \end{aligned}
  \label{E2_1}
\end{equation}

where $\hat{s}$ represents the spatial domain component, $\hat{t}$ represents the temporal domain component, and $\omega$ represents the angular frequency along with transform function $\hat{f},h$ and $g$. The first line corresponds to applying the Fourier transform to both sides of Equation \ref{E2_0}. In the second line, inspired by the Time-Independent Schrödinger Equation \citep{zabusky1968solitons,rana2019time}, we assume that $f(x,y;s,t)$ can be decomposed into the spatial and temporal domains, i.e., $\hat{f}(x,y;s,t) = h(\hat{s})g(\hat{t})$. Next, by utilizing the convolution theorem \citep{zayed1998convolution} for tempered distributions, which states that under suitable conditions the Fourier transform of a convolution of two functions (or signals) is the pointwise product of their Fourier transform, i.e., $\mathcal{F}{(h*g)} = \mathcal{F}{(h)} \cdot \mathcal{F}{(g)}$, where $\mathcal{F}{(\cdot)}$ represents the Fourier transform, we convert the convolution formula into the following expression:

\begin{equation}
  \begin{aligned}
    \mathcal{F}[h(\hat{s})*g(\hat{t})] & \propto \mathcal{F}(h(\hat{s})) \cdot \mathcal{F}(g(\hat{t}))                                                                      \\
                                       & \propto \int_{-\infty}^{\infty} h(\hat{s})e^{-i\omega\hat{s}}d\hat{s}\int_{-\infty}^{\infty} g(\hat{t})e^{-i\omega\hat{t}}d\hat{t} \\
                                       & \propto \mathcal{F}(\mathcal{X})
  \end{aligned}
  \label{E2_2}
\end{equation}

The first line of the Formula \ref{E2_2} is obtained through the convolution theorem, while the second line expands $\mathcal{F}(h(\hat{s}))$ and $\mathcal{F}(g(\hat{t}))$ using the Fourier transform. The third line is derived from Equation \ref{E2_1}. Therefore, it indicates that the observed dataset $\mathcal{X}$ can be obtained by convolving the convolution kernel with temporal information and the spatial details, which we will deal with corresponding to the two kinds of invariance. We posit that the convolution operation precisely aligns with the functional causal data generation mechanism, i.e., $\mathcal{X}\propto h(\hat{s})*g(\hat{t})$. Conversely, the convolution operation can be used to analytically model the generation mechanism of functional time-series data. Therefore, we will employ the convolution operation to extract the functional causal relationships within the window causal graph. In conclusion, the equivalence between the generation mechanism of time-series causal data and convolution operations serves as motivation to incorporate convolution operations into our STIC framework.

\subsection{Granger Causality\label{Granger_Causality}}

Granger causality \citep{granger1969investigating,pavasant2021spatio,assaad2022survey} is a method that utilizes numerical calculations to assess causality by measuring fitting loss and variance. Formally, we say that a variable $X_i$ Granger-causes another variable $X_j$ when the past values of $X_i$ at time $t$ (i.e., $X_i^1, \cdots, X_i^{t-1}$) enhance the prediction of $X_j$ at time $t$ (i.e., $X_j^t$) compared to considering only the past values of $X_j$. The definition of Granger causality is as follows:

\begin{definition}[Granger Causality\label{Granger_Causality_defin}]
  Let $\mathcal{X}=\{X_1, \cdots, X_d\}\in{\mathbb{R}^{d\times T}}$ be a observed dataset containing $d$ variables. If $\sigma_{\tau}^2(X_j|\mathcal{X})<\sigma_{\tau}^2(X_j|\mathcal{X}-X_i)$, where $\sigma_{\tau}^2(X_j|\mathcal{X})$ denotes the variance of predicting $X_j$ using $\mathcal{X}$ with $\tau$ time lags, we say that $X_i$ causes $X_j$, which is represented by $\mathcal{W}_{i,j}^{\tau}=1$.
\end{definition}

In simpler terms, Granger causality states that $X_i$ Granger-causes $X_j$ if past values of $X_i$ (i.e., $X_i^{t'}$) provide unique and statistically significant information for predicting future values of $X_j$ (i.e., $X_j^t$). Therefore, following the definition of Granger causality, we can approach causal discovery as an autoregressive problem.

\section{Method\label{method}}

In this section, we introduce STIC, which involves four components: Window Representation, Time-Invariance Block, Mechanism-Invariance Block, and Parallel Blocks for Joint Training. The process is depicted in Figure \ref{fig2}. Firstly, we transform the observed time series into a window representation format, leveraging Lemma \ref{Independence_Invariance}. Next, we input the window representation into both the time-invariance block and the mechanism-invariance block ($B_t$ and $B_m$ in Figure \ref{fig2}). Finally, we conduct joint training using the extracted features from two kinds of parallel blocks. In particular, the time-invariance block $B_t$ generates the estimated window causal matrix $\hat{\mathcal{W}}$. To better represent the symbols used in Section \ref{method}, we also arrange a table to summarize and show their definitions, as shown in Table \ref{symbols_table_method}. The subsequent subsections provide a detailed explanation of the key components of STIC. 

\begin{table}[!t]
  \centering
  \caption{Summary of symbol definitions in Section \ref{method}.}
  \label{symbols_table_method}
  \resizebox{0.8\linewidth}{!}{%
  \begin{tabular}{c|c}
    \toprule[1pt]
  Symbol & Description \\ \midrule
  $\overline{\tau}$  & The predefined maximum time lag \\ \midrule
  $\hat{\tau}$ & The predefined window length, $\hat{\tau}=\overline{\tau}+1$\\ \midrule
  $W\in\mathbb{R}^{d\times\hat{\tau}\times c}$ & The window representation, where $c=T-\hat{\tau}$\\ \midrule
  $B_t$ & The time-invariance block\\ \midrule
  $K_t \in \mathbb{R}^{d\times\hat{\tau}}$ & The convolution kernel in the time-invariance block\\ \midrule
  $\odot$ & The Hadamard product \\ \midrule
  $\hat{\mathcal{W}}\in\mathbb{R}^{d\times d\times\hat{\tau}}$ & The predicted window causal matrix \\ \midrule
  $B_m$ & The mechanism-invariance block \\ \midrule
  $K_m\in\mathbb{R}^{d\times\hat{\tau}}$ & The convolution kernel in the mechanism-invariance block\\ \midrule
  $\overline{W}\in\mathbb{R}^{d\times\hat{\tau}\times c}$ & The prediction of $f(W)$\\ \midrule
  $\hat{\mathcal{X}}\in\mathbb{R}^{d\times c}$ & The prediction of the observed dataset\\ \midrule
  $f_1:\mathbb{R}^{c\times d\times \hat{\tau}}\rightarrow\mathbb{R}^{d\times d\times \hat{\tau}}$ & The feed-forward neural network\\ \midrule
  $\hat{\mathcal{W}}_{i,j}^\tau$  & The estimated binary existence of the causal effect of $X_i$ on $X_j$ with $\tau$ time lags\\ \midrule
  $p$ & The threshold used to eliminate edges with low probability of existence \\ \midrule
  $f_2:\mathbb{R}^{d\times \hat{\tau}}\rightarrow\mathbb{R}^{d\times \hat{\tau}}$  & The estimated transformation function\\ 
   \bottomrule[1pt]
  \end{tabular}%
  }
\end{table}

\subsection{Window Representation\label{Window_Representation}}

The observed dataset $\mathcal{X}\in\mathbb{R}^{d\times T}$ contains $d$ observed continuous time series (variables) with $T$ time steps. We also define a predefined maximum time lag as $\overline{\tau}$. To ensure that the entire causal contemporaneous and time-lagged influence is observed, we calculate the minimum length of the window that can capture this influence as $\hat{\tau}=\overline{\tau}+1$. To construct the window observations, we select the observed values from the first $T-1$ time steps, i.e. $\mathcal{X}^{1:T-1}=\{X_1^{1:T-1}, \cdots, X_d^{1:T-1}\}\in\mathbb{R}^{d\times (T-1)}$. Using a sliding window approach along the temporal chain of observations, we create window observations of length $\hat{\tau}$ and width $d$, with a step size of 1. This process results in $c=T-\hat{\tau}$ window observations $W_\psi$ where $\psi=1,...,c$. These window observations are referred to as the window representation $W$, as illustrated in Figure \ref{fig_sliding}.

\begin{figure}[!t]
  \centering
  \includegraphics[width=0.6\linewidth]{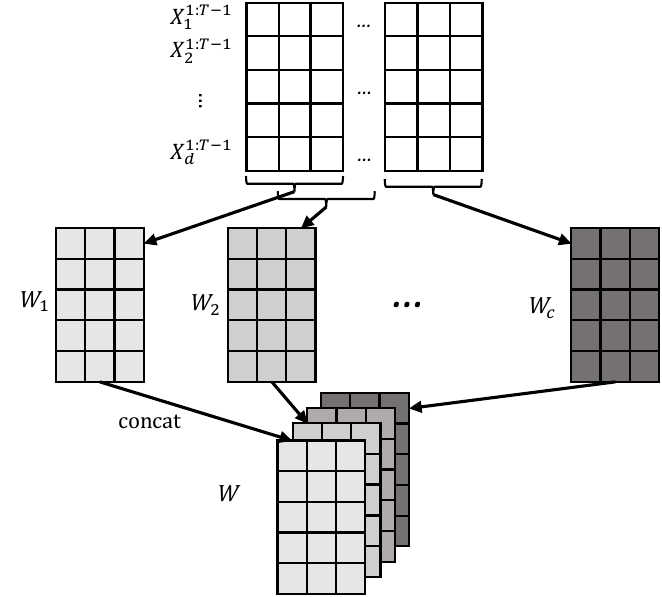}
  \caption{Window representation. First, we get $c$ matrices $W_\psi$ by sliding window with predefined window length $\hat{\tau}$ and step size $1$, where each $W_\psi\in\mathbb{R}^{d\times \hat{\tau}}, \psi=1,...,c$ represents the data we observe in the window. Then, we concatenate the obtained $W_\psi$ together to get the final window representation $W\in\mathbb{R}^{d\times \hat{\tau}\times c}$.}
  \label{fig_sliding}
\end{figure}

\subsection{Time-Invariance Block\label{Time_Invariance_Block}}

According to Definition \ref{time_invariance}, the causal relationships among variables remain unchanged as time progresses. Exploiting this property, we can extract shared information from the window representation $W$ and utilize it to finally obtain the estimated window causal matrix $\hat{\mathcal{W}}$. Inspired by convolutional neural networks used in causal discovery\citep{nauta2019causal}, we introduce a invariance-based convolutional network structure denoted as $B_t$ to incorporate temporal information within the window representation $W$. For each window observation $W_\psi\in \mathbb{R}^{d\times\hat{\tau}}$, we employ the following formula to aggregate similar information among the time series within the window observations

\begin{equation}
  \hat{\mathcal{W}}=f_1(K_t \odot W_1,...,K_t \odot W_c)
  \label{E3}
\end{equation}

Here, shared $K_t\in\mathbb{R}^{d\times\hat{\tau}}$ represents a learnable extraction kernel utilized to extract information from each window observation. The symbol $\odot$ denotes the Hadamard product between matrices, and $f_1$ refers to a neural network structure. By applying the Hadamard product with the shared kernel $K_t$, the resulting output exhibits similar characteristics across the time series. Moreover, $K_t$ serves as a time-invariant feature extractor, capturing recurring patterns that appear in the input series and aiding in forecasting short-term future values of the target variable. In Granger causality, these learned patterns reflect causal relationships between time series, which are essential for causal discovery \citep{nauta2018temporal}. To ensure the generality of STIC, we employ a simple feed-forward neural network (FNN) $f_1:\mathbb{R}^{c\times d\times \hat{\tau}}\rightarrow\mathbb{R}^{d\times d\times \hat{\tau}}$ to extract shared information from each $K_t \odot W_\psi, \psi=1,...,c$. Furthermore, we impose a constraint to prohibit self-loops in the estimated window causal matrix $\hat{\mathcal{W}}$ when the time lag is zero. That is:

\begin{eqnarray}
  \hat{\mathcal{W}}_{i,j}^\tau=\left\{
  \begin{array}{rcl}
    0        &  & {\text{if }i=j\text{ and }\tau=0} \\
    0        &  & {\text{if }\hat{\mathcal{W}}_{i,j}^\tau < p}       \\
    1 &  & {\text{else}}
  \end{array} \right.,
  \label{E4}
\end{eqnarray}

where $\hat{\mathcal{W}}_{i,j}^\tau$ represents the estimated binary existence of the causal effect of $X_i$ on $X_j$ with a time delay of $\tau\in\{0,...,\overline{\tau}\}$, and $p$ is a threshold used to eliminate edges with low probability of existence.

\subsection{Mechanism-Invariance Block\label{Mechanism_Invariance_Block}}

As stated in Definition \ref{mechanism_invariance}, the causal conditional probability relationships among the time series remain unchanged as time varies. Consequently, the causal functions between variables also remain constant over time. With this in mind, our objective in $B_m$ is to find a unified transform function $f_2:\mathbb{R}^{d\times \hat{\tau}}\rightarrow\mathbb{R}^{d\times \hat{\tau}}$ that accommodates all window observations.
To achieve this goal, as depicted in Figure \ref{fig2}, we employ a convolution kernel $K_m\in\mathbb{R}^{d\times\hat{\tau}}$ as $f_2$. This kernel performs a Hadamard product operation with each window $W_\psi\in\mathbb{R}^{d\times\hat{\tau}}$ in $W$, where $\psi=1,...,c$. Subsequently, we employ the Parametric Rectified Linear Unit (PReLU) activation function \citep{zhu2017diverse} to obtain the output $\overline{W}_{\psi}\in\mathbb{R}^{d\times\hat{\tau}}$, 

\begin{equation}
  \overline{W}_{\psi}=PReLU(K_m\odot W_{\psi})
  \label{E2}
\end{equation}

Each $\overline{W}_{\psi}$ represents the transformed matrix obtained from the window observation $W_\psi$ by a unified transform function $f_2$ implemented with convolution kernel $K_m$. Each $\overline{W}_{\psi}$ is finally used to predict $\hat{\mathcal{X}}^{\hat{\tau}+\psi}$. Note that this transform function $f_2$ can also be composed of $N$ different but equal dimensional kernels $K_m^1,...,K_m^N \in\mathbb{R}^{d\times\hat{\tau }}$, which are nested to perform complex nonlinear transformations. After $f_2$, the value inside the window $W_\psi\in\mathbb{R}^{d\times\hat{\tau}}$ is then pressed for $\hat{\mathcal{W}}$-selected column summation to predict $\hat{\mathcal{X}}^{\hat{\tau}+\psi}\in\mathbb{R}^{d}$.

\begin{figure*}[!t]
  \centering
  \subfigure{\includegraphics[width=0.49\linewidth]{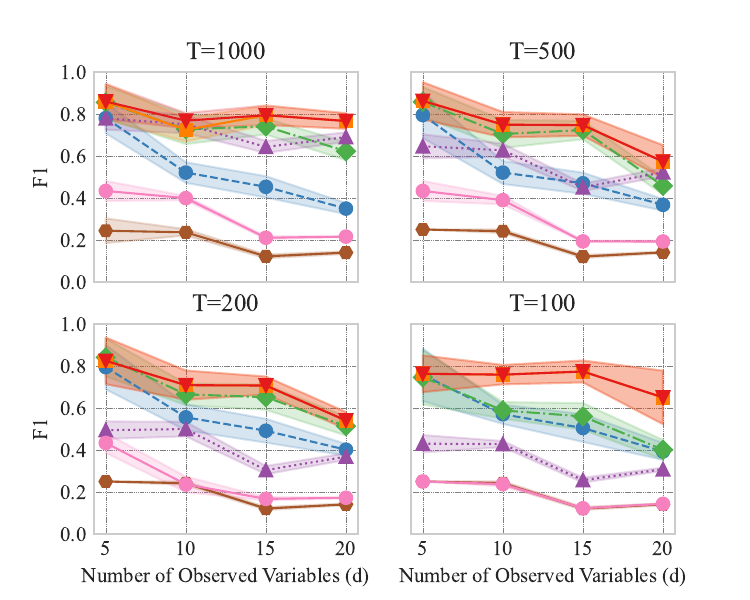}%
  \label{fig3:a}}
  \subfigure{\includegraphics[width=0.49\linewidth]{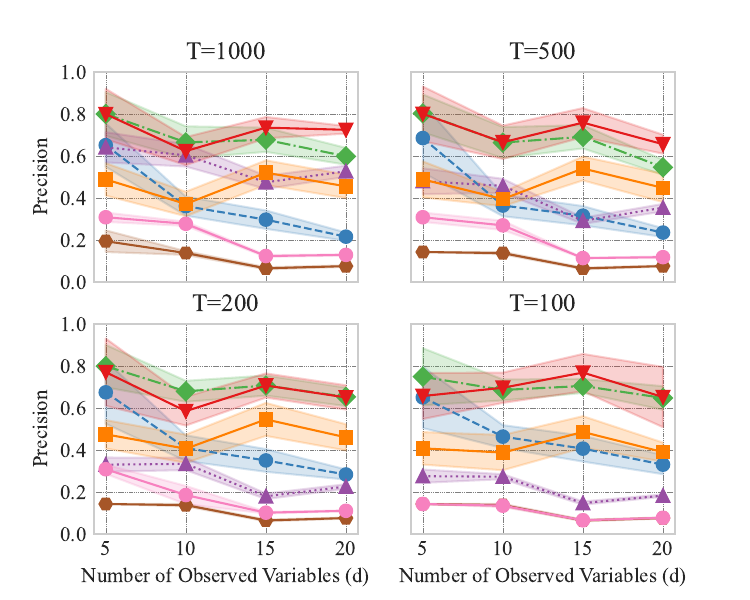}%
  \label{fig3:b}}\vspace{-5mm}

  \subfigure{\includegraphics[width=\linewidth]{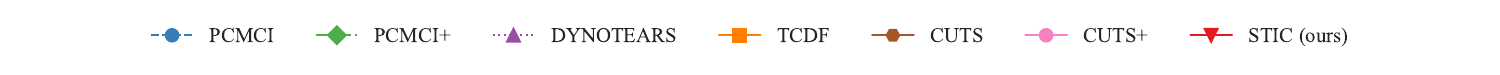}%
  \label{fig3:c}}
  \caption{The results of F1 (detailed in Figure \ref{fig3} left) and precision (detailed in Figure \ref{fig3} right) evaluated on linear Gaussian datasets with varying numbers of variables ($d$) and observed time steps ($T$). The observed data $\mathcal{X}$ is generated by sampling $d$ time series with $T$ observed time steps from a linear Gaussian distribution. We consider different values of $d$ ranging from 5 to 20, and varying observed time steps $T$ including 100, 200, 500, and 1000.}
  \label{fig3}
\end{figure*}

\subsection{Parallel Blocks for Joint Training\label{Joint_Training}}

So far, we have obtained the estimated window causal matrix $\hat{\mathcal{W}}$ by using $B_t$. In addition, we also obtained the transformed matrix $\overline{W}$ with $B_m$. We used convolutional neural networks in both $B_t$ and $B_m$. Their structures are similar, but their functions and purposes are different. In $B_t$, we focus on the shared underlying unified structure of all window observations. Following the Definition \ref{time_invariance} of short-term time invariance, we choose a convolutional neural network structure with translation invariance \citep{kayhan2020translation,singh2023orthogonal}. We expect that $f_1$ with $K_t$ as the main component can extract the invariant structure of the window representation $W$. In $B_m$, we focus on the convolution kernel $K_m$, which is expected to serve as a unified transform function $f_2$ to satisfy the Definition \ref{mechanism_invariance} of short-term mechanism invariance and perform complex nonlinear transformations.

Based on Definition \ref{Granger_Causality_defin} described in Section \ref{Granger_Causality}, after obtaining the estimated window causal matrix and the transform functions between variables, we can combine the outputs from the time-invariance and mechanism-invariance blocks and using $\hat{\mathcal{W}}$-selected column summation to predict $\hat{\mathcal{X}}$. We consider that the time-invariance block facilitates the identification of parent-child relationships between variables, formalized as $\hat{\mathcal{W}}$, while the mechanism-invariance block helps to explore the generative mechanisms, i.e., transform functions. Consequently, we can naturally combine the outputs $\hat{\mathcal{W}}$ and $\overline{W}$. Specifically, by utilizing $\hat{\mathcal{W}}$ and the computed $\overline{W}_\psi,\psi=1,...,c$, we can ultimately obtain the estimates $\hat{\mathcal{X}}^{\hat{\tau}+\psi}$, namely $\hat{\mathcal{W}}$-selected column summation,

\begin{equation}
  \hat{\mathcal{X}}^{\hat{\tau}+\psi}=\sum_{\tau=0}^{\overline{\tau}} \overline{W}_\psi^{\tau} \odot \hat{\mathcal{W}}^{\tau}
  \label{E5}
\end{equation}

Here, we need to consider each $\tau\in\{0,...,\overline{\tau}\}$ and combine the estimated window causal matrix $\hat{\mathcal{W}}$ with the corresponding transformed window observations $\overline{W}_\psi$ obtained through $B_m$ to obtain the values of $\hat{\mathcal{X}}^{\hat{\tau}+\psi}$. Our ultimate goal is to find a window causal matrix $\hat{\mathcal{W}}$ that satisfies the conditions by optimizing the squared error loss (MSE) $\mathcal{L}$ between the predicted $\hat{\mathcal{X}}$ and the ground truth $\mathcal{X}$ at each time point $t$. The final auto-regressive equation is expressed as follows:

\begin{equation}
  \mathcal{L}=\sum_{t=\hat{\tau}+1}^{T}\sum_{i=1}^{d}||X_i^{t}-\hat{\mathcal{X}}_i^{t}||^2_2
  \label{E6}
\end{equation}

We adopt the gradient $\bigtriangledown \mathcal{L}$ to optimize the parameters within the time-invariance and mechanism-invariance blocks.

\section{Experiment Results \label{Experiment_Results}}

In this section, we present a comprehensive series of experiments on both synthetic and benchmark datasets to verify the effectiveness of the proposed STIC. Following the experimental setup of \citep{runge2019detecting, runge2020discovering}, we compare STIC against the \textbf{constraint-based approaches} such as PCMCI \citep{runge2019detecting} and PCMCI+ \citep{runge2020discovering}, the \textbf{score-based approaches} such as DYNOTEARS \citep{pamfil2020dynotears}, and the \textbf{Granger-based approaches} TCDF \citep{nauta2019causal}, CUTS \citep{cheng2022cuts} and CUTS+ \citep{cheng2023cuts+}.

Our causal discovery algorithm is implemented using PyTorch. The source code for our algorithm is publicly available at the following URL \footnote{\url{https://github.com/HITshenrj/STIC}}. Both the time-invariance block and mechanism-invariance block are implemented using convolutional neural networks. 

Firstly, we conducted experiments on synthetic datasets, encompassing both linear and non-linear cases. The methods of generating synthetic datasets for both linear and non-linear cases will be introduced separately in Section \ref{Synthetic_Data_exp}. Secondly, we proceeded to perform experiments on benchmark datasets to demonstrate the practical value of our model in Section \ref{benchmark}. Thirdly, to evaluate the sensitivity of hyper-parameters, such as the learning rate (default $1e^{-5}$), the predefined $\overline{\tau}$ (default $0.4d$) and the threshold $p$ (default $0.3$), we conducted ablation experiments as detailed in Section \ref{ablation_study}.

We employ two kinds of evaluation metrics to assess the quality of the estimated causal matrix: the F1 score and precision. A higher F1 score indicates a more comprehensive estimation of the window causal matrix, while a higher precision indicates the ability to identify a larger number of causal edges. In this paper, we consider causal edges with different time lags for the same pair of variables as distinct causal edges. Specifically, if there exists a causal edge from $X_{i}$ to $X_{j}$ with a time lag of $\tau_1$, and another causal edge from $X_{i}$ to $X_{j}$ with the time lags of $\tau_2$, where $i\neq j$ and $\tau_1\neq \tau_2$, we regard these as two separate causal edges. Due to the need to predefine the maximum time lag in STIC, we truncate the estimated $\hat{\mathcal{W}}\in\mathbb{R}^{d\times d\times (\overline{\tau}+1)}$ to $\hat{\mathcal{W}}\in\mathbb{R}^{d\times d\times (\widetilde{\tau}+1)}$ and then compute the evaluation metrics. We handle other baselines (such as PCMCI, PCMCI+, DYNOTEARS, CUTS, CUTS+) requiring a predefined maximum time lag parameter in the same manner.

\subsection{Baselines}

We select six state-of-the-art causal discovery methods as baselines for comparison:

\begin{itemize}
  \item PCMCI \citep{runge2019detecting} is a notable work that extends the PC algorithm \citep{kalisch2007estimating} for causal discovery from time-series data. The source code for PCMCI is available at \url{https://github.com/jakobrunge/tigramite}. PCMCI divides the causal discovery process into two components: the identification of relevant sets through conditional independence tests and the direction determination. It assumes causal stationarity, the absence of contemporaneous causal links, and no hidden variables. Specifically, the PC-stable algorithm \citep{colombo2014order} is employed to remove irrelevant conditions through iterative independence tests. Furthermore the Multivariate Conditional Independence test is conducted to address false-positive control in scenarios with highly interdependent time series.
  \item PCMCI+ \citep{runge2020discovering} improves upon PCMCI by reducing the number of independence tests and optimizing the selection of conditional sets, resulting in superior effectiveness and efficiency in the same experimental setting. The source code is also available at \url{https://github.com/jakobrunge/tigramite}. PCMCI+ overcomes the limitation of the ``no contemporaneous causal links" assumption in PCMCI. PCMCI+  expedites the selection of conditional sets by testing all time-lagged pairs conditional on only the strongest $p$ adjacencies in each $p$-iteration, without evaluating all $p$-dimensional subsets of adjacencies. Moreover, intra-slice sets are introduced to further refine the determination of all structures.
  \item DYNOTEARS \citep{pamfil2020dynotears} represents a groundbreaking advancement in the field of causal discovery from time-series data by transforming the combinatorial graph search problem into a continuous optimization problem. The details of this work can be found in the repository located at \url{https://github.com/ckassaad/causal_discovery_for_time_series}. This approach characterizes the acyclicity constraint as a smooth equality constraint through the minimization of a penalized loss while adhering to the acyclicity constraint.
  \item TCDF \citep{nauta2019causal} is an outstanding work that utilizes attention-based convolutional neural networks (CNNs) to explore causal relationships between time series and the time delay between cause and effect. The code for TCDF can be accessed at \url{https://github.com/M-Nauta/TCDF}. By leveraging Granger causality, TCDF predicts one time series based on other time series and its own historical values, employing CNNs to identify and analyze causal relationships within time-series data.
  \item CUTS \citep{cheng2022cuts} is an outstanding neural Granger causal discovery algorithm for jointly imputing unobserved data points and building causal graphs, by incorporating two mutually boosting modules (latent data prediction and causal graph fitting) in an iterative framework. fter hallucinating and registering unstructured data, which might be of high dimension and with complex distribution, CUTS builds a causal adjacency matrix with imputed data under sparse penalty. The code for CUTS is available at \url{https://github.com/jarrycyx/UNN/tree/main/CUTS}. CUTS is a promising step toward applying causal discovery to real-world applications with non-ideal observations.
  \item CUTS+ \citep{cheng2023cuts+} is built on the Granger-causality-based causal discovery method CUTS and increases scalability through coarse-to-fine discovery and message-passing-based methods. The code for CUTS+ can be accessed at \url{https://github.com/jarrycyx/UNN/tree/main/CUTS_Plus}. CUTS+ significantly improves causal discovery performance on high-dimensional data with various types of irregular sampling.
\end{itemize}

\subsection{Experiments on Synthetic Datasets \label{Synthetic_Data_exp}}

\begin{figure}[!t]
  \centering
  \includegraphics[width=0.6\linewidth]{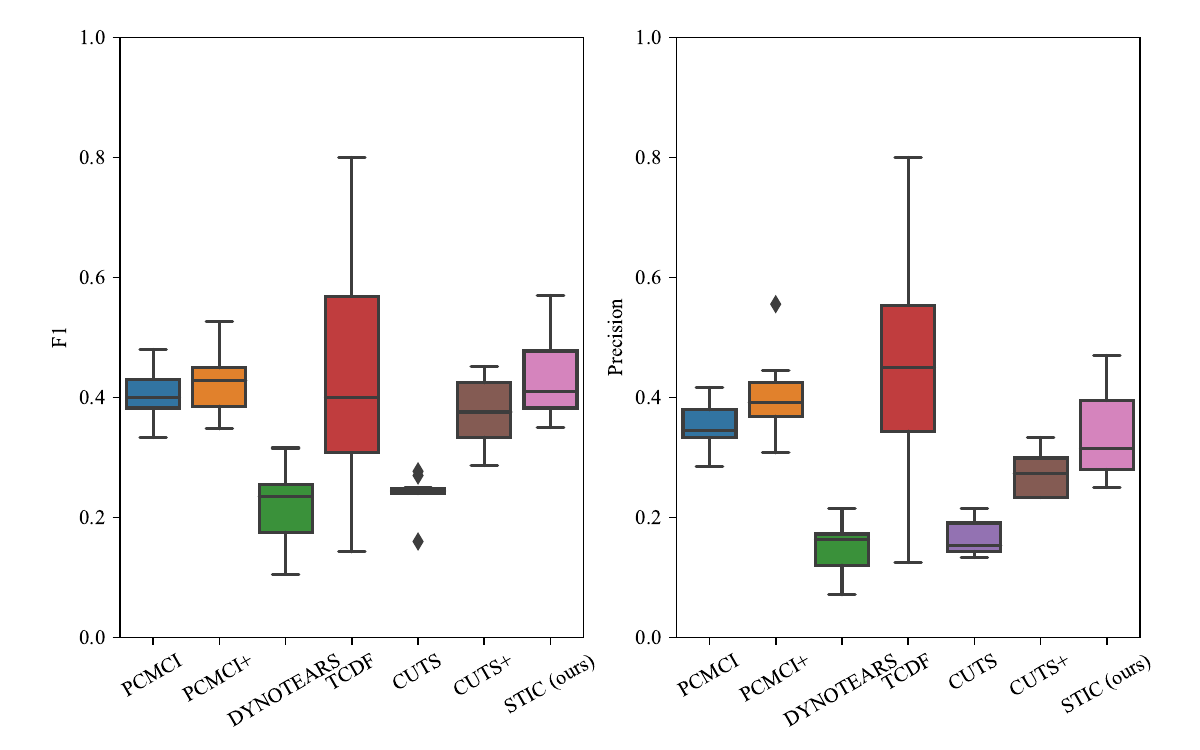}

  \caption{The results of F1 and precision evaluated on nonlinear Gaussian datasets. We fix numbers of variables $d=5$ and observed time steps $T=1000$.}
  \label{fig5}
\end{figure}

We generate synthetic datasets in the following manner. Firstly, we consider several typical challenges \citep{runge2019detecting,runge2020discovering} with contemporaneous and time-lagged causal dependencies, following an additive noise model. We set the ground truth maximum time lag to $0.4d$ and initialize the existence of each edge in the true window causal matrix $\mathcal{W}$ with a probability of 50\%. For each variable $X_i$, its relation to with its parents $Pa_{\mathcal{G}}(X_i)$ is defined as $X_i=f_i(Pa_{\mathcal{G}}(X_i))+\varepsilon_i$, where $f_i$ represents the ground truth transformation function between $X_i$'s parents $Pa_{\mathcal{G}}(X_i)$ and $X_i$. If $X_j\in Pa^{\tau}_{\mathcal{G}}(X_i)$, then in the ground truth causal matrix $\mathcal{W}$, $\mathcal{W}_{ji}^ {\tau}=1$. Secondly, for linear datasets, each $f_i$ is defined by a weighted linear function, while for nonlinear datasets, each $f_i$ is defined using a weighted cosine function. We sample the weights from a uniform distribution, such that if a causal edge exists, the corresponding weight in the additive noise model is sampled from the interval $U(-2,-0.5]\cup [0.5,2)$ to ensure non-zero values. For non-causal edges, the weight is set to 0. The noise term $\varepsilon_i$ follows either a standard normal distribution $\mathcal{N}(0,1)$ or is uniformly sampled from the interval $U[0,1]$. These data-generating procedures are similar to those used by the PCMCI family \citep{runge2019detecting,runge2020discovering} and CUTS family \citep{cheng2022cuts,cheng2023cuts+}.

In the following, we present different results on linear Gaussian datasets (Section \ref{LG}), nonlinear Gaussian datasets (Section \ref{NLG}), and linear uniform datasets (Section \ref{LU}) to demonstrate the superiority of our model. Specifically, to reduce the impact of random initialization, we conduct $10$ experiments for each type of datasets and report the mean and variance of the experimental results.

\subsubsection{\textbf{Linear Gaussian Datasets}\label{LG}}

The data generation process for linear Gaussian datasets follows the relationship $X_i=w_i Pa_{\mathcal{G}}(X_i)+\varepsilon_i$, where $\varepsilon_i$ is sampled from a standard normal distribution $\mathcal{N}(0,1)$. To demonstrate the capability of our model in causal discovery from time-series data on datasets of varying sizes, we compare STIC with baselines under different conditions, including different numbers of variables ($d=\{5,10,15,20\}$) and different lengths of time steps ($T=\{100, 200, 500, 1000\}$). 

The results are summarized in Figure \ref{fig3}. Figure \ref{fig3} left presents the variation of F1 score as the number of variables increases, while Figure \ref{fig3} right shows the variation of precision with the number of variables. A comprehensive analysis of the experiments requires the joint consideration of both Figure \ref{fig3} left and right. From a macroscopic perspective, our proposed STIC achieves the highest F1 scores on linear Gaussian datasets, while precision reaches the state-of-the-art levels in most cases. We will compare the performance of STIC and the baselines from two aspects of analysis:\\
\textbf{Aspect 1: The relationship between the number of variables $d$ and the model when $T$ remains constant.} When the observed time steps is fixed at $T=1000$, corresponding to the top-left graphs in Figure \ref{fig3} left and right, we observe that as the number of variables increases, the F1 scores of all causal discovery methods tend to decrease. However, our proposed STIC achieves an average F1 score of 0.86, 0.77, 0.79, 0.77 and an average precision of 0.80, 0.62, 0.74, 0.72 across the four different numbers of variables, surpassing other strong baselines. By comparing the line plots in the corresponding positions of Figure \ref{fig3} left and right, especially when $T=100$, corresponding to the bottom-right graphs in Figure \ref{fig3} left and right, we find that our proposed STIC achieves an average F1 score of 0.76, 0.76, 0.77, 0.65 and an average precision of 0.66, 0.70, 0.77, 0.65 across the four different numbers of variables, significantly outperforming other strong baselines. 

In the case of fixed observed time steps, as the number of variables increases, constraint-based approaches such as PCMCI and PCMCI+ suffer from severe performance degradation because they require significant prior knowledge involvement in determining the threshold $p$, which determines the presence of causal edges. For score-based methods, the DYNOTEARS method shows relatively stable performance as the number of variables increases, but it does not achieve the optimal performance among all methods. As for Granger-based methods, CUTS and CUTS+ often suffer from poor performance due to the inability to recognize time lags. Our proposed STIC and the TCDF method achieve competitive results in terms of F1 scores. However, our method exhibits higher precision. 


\begin{figure}[!t]
  \centering
  \subfigure{\includegraphics[width=0.6\linewidth]{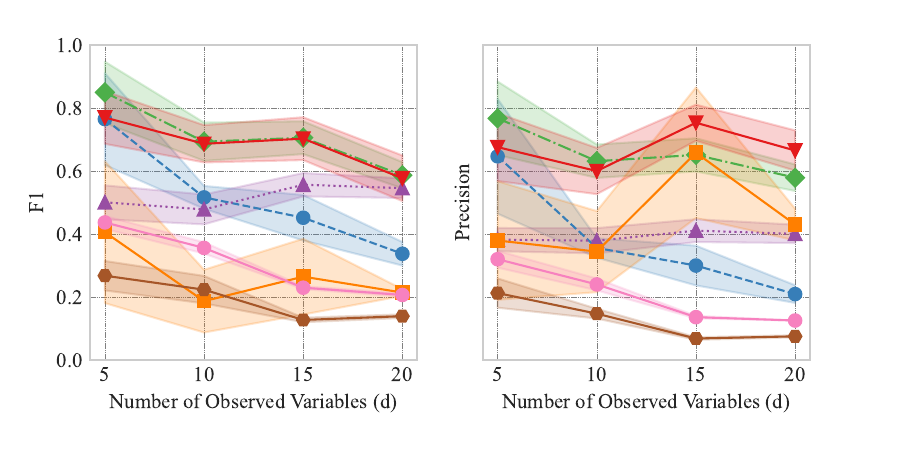}%
  \label{fig4:a}}\vspace{-5mm}

  \subfigure{\includegraphics[width=0.6\linewidth]{legend.pdf}%
  \label{fig4:b}}
  \caption{The results of F1 and precision evaluated on linear uniform datasets with fixed observed time steps ($T=1000$) and the number of variables ($d$) ranging from 5 to 20.}
  \label{fig4}
\end{figure}

We attribute this superior performance to the window representation employed in STIC. By repeatedly extracting features from observed time series in different window observations, such representation acts as a form of data augmentation and aggregation. It enables a macroscopic view of common characteristics among multiple window observations, facilitating the learning of more accurate causal structures. Thus, our STIC model achieves optimal performance when the number of variables $d$ changes.\\
\textbf{Aspect 2: The relationship between the observed time steps $T$ and the model when $d$ remains constant.} When examining the impact of observed time steps $T$ on the models while keeping the number of variables constant, we observe that our STIC method consistently maintains an F1 score of approximately 0.7 across different values of $T$. However, PCMCI+ and DYNOTEARS exhibit a significant decline in their F1 scores as $T$ decreases. For instance, at $T=1000$, PCMCI+ and DYNOTEARS perform similarly to our STIC method, but at $T=100$, their F1 scores drop to half of that achieved by our STIC method. For PCMCI, it consistently falls behind our STIC method, regardless of changes in $T$. While TCDF achieves a relatively consistent level of performance, it exhibits lower performance compared to our model. Furthermore, we find that after treating different time lags as different causal edges, the F1 scores and precisions of CUTS and CUTS+ are maintained at a relatively low level.

For constraint-based approaches, the PCMCI and PCMCI+ algorithms perform poorly because as the number of samples decreases, the statistical significance of conditional independence cannot fully capture the causal relationships between variables. As for score-based methods, DYNOTEARS does not perform well on linear data. One possible reason is that DYNOTEARS heavily relies on acyclicity in its search, which may not converge to the correct causal graph. Regarding Granger-based methods, we believe that they are overly conservative and fail to accurately predict all correct causal edges. 

In contrast, our STIC model is capable of predicting a greater number of causal edges, which is crucial for discovering new knowledge. This superior performance can be attributed to the design of the convolutional time-invariance block. This design allows for the extraction of more causal structure features from limited observed data, enabling a more accurate exploration of potential causal relationships even with a small number of samples. Consequently, our STIC model effectively addresses the challenge of causal discovery in low-sample scenarios, i.e., improving sample efficiency.


\begin{table*}[!t]
  \centering
  \caption{The results of F1 and precision evaluated on FMRI dataset. In terms of average of both F1 and precision, STIC outperforms the other baselines. Moreover, STIC shows a better stablity in view of variance.}
  \label{tab_Simulated}
  \resizebox{\textwidth}{!}{%
  \begin{tabular}{ccccccccc}
  \toprule[1pt]
   & \multicolumn{1}{c|}{} & PCMCI & PCMCI+ & DTNOTEARS & TCDF & CUTS & CUTS+ & STIC (ours) \\ \midrule
  \multicolumn{1}{c|}{\multirow{2}{*}{$d=5$}} & \multicolumn{1}{c|}{F1} & 0.38$\pm$0.004 & 0.37$\pm$0.009 & 0.43$\pm$0.009 & 0.42$\pm$0.007 & 0.35$\pm$0.001 & 0.36$\pm$0.034 & \textbf{0.45$\pm$0.003}$\uparrow$ \\
  \multicolumn{1}{c|}{} & \multicolumn{1}{c|}{precision} & 0.31$\pm$0.014 & 0.33$\pm$0.016 & 0.31$\pm$0.017 & 0.44$\pm$0.006 & 0.22$\pm$0.007 & 0.29$\pm$0.035 & \textbf{0.70$\pm$0.030}$\uparrow$ \\ \midrule
  \multicolumn{1}{c|}{\multirow{2}{*}{$d=10$}} & \multicolumn{1}{c|}{F1} & 0.24$\pm$0.001 & 0.31$\pm$0.001 & 0.20$\pm$0.002 & 0.42$\pm$0.001 & 0.11$\pm$0.001 & 0.44$\pm$0.001 & \textbf{0.47$\pm$0.001}$\uparrow$ \\
  \multicolumn{1}{c|}{} & \multicolumn{1}{c|}{precision} & 0.34$\pm$0.001 & 0.37$\pm$0.00 & 0.33$\pm$0.004 & 0.44$\pm$0.003 & 0.20$\pm$0.000 & 0.51$\pm$0.001 & \textbf{0.60$\pm$0.023}$\uparrow$  \\ \midrule
  \multicolumn{1}{c|}{\multirow{2}{*}{$d=15$}} & \multicolumn{1}{c|}{F1} & 0.27$\pm$0.003 & 0.35$\pm$0.007 & 0.19$\pm$0.011 & 0.35$\pm$0.002 & 0.14$\pm$0.001 & 0.26$\pm$0.020 & \textbf{0.53$\pm$0.006}$\uparrow$ \\
  \multicolumn{1}{c|}{} & \multicolumn{1}{c|}{precision} & 0.19$\pm$0.002 & 0.20$\pm$0.001 & 0.11$\pm$0.010 & 0.43$\pm$0.032 & 0.08$\pm$0.002 & 0.26$\pm$0.003 & \textbf{0.80$\pm$0.005}$\uparrow$ \\
  \bottomrule[1pt]
  \end{tabular}%
  }
\end{table*}

\subsubsection{\textbf{Nonlinear Gaussian Datasets}\label{NLG}}

In this section, we perform experiments on nonlinear Gaussian datasets to evaluate the performance of STIC. We set the number of variables $(d=5)$ and the observed time steps $(T=1000)$. For each $X_i$, its relationship with its parents $Pa_{\mathcal{G}}(X_i)$ is defined using the cosine function, and the noise term $\varepsilon_i$ follows the standard normal distribution.

The performance of STIC and the baselines is visualized in Figure \ref{fig5}. It can be observed that STIC achieves an F1 score of 0.44, which is higher than all baselines (PCMCI: 0.41, PCMCI+: 0.43, DYNOTEARS: 0.22, TCDF: 0.41, CUTS: 0.24, CUTS+: 0.37). It can be seen that STIC achieves a higher F1 score despite having lower precision compared to the other baselines. For constraint-based methods (PCMCI and PCMCI+), one possible reason for achieving similar F1 scores with our proposed STIC is that the length of observed time steps is set to 1000, which is sufficient for statistical independence tests. Thus, the conditional independence tests can directly operate on the data without being affected by noise. Regarding score-based methods, we believe that DYNOTEARS uses a simple network that may not effectively capture nonlinear transforms, leading to lower F1 scores. For Granger-based methods, although TCDF achieves a comparable F1 score to STIC (and even higher precision), the variance of STIC is significantly lower. This indicates that TCDF is highly unstable, and there is a considerable amount of uncertainty in causal discovery. One possible reason for this is that TCDF does not incorporate window representation like STIC, which could lead to inefficient training of the convolutional neural network. We find that CUTS and CUTS+ are not very good at causal discovery on nonlinear Gaussian datasets, and both $F1$ and precision are lower than our STIC. One possible reason is that both models rely on graph neural networks and treat learnable graph structures as estimated causal graphs. However, the graph structure in the graph neural network is full of correlational relationships rather than causal relationships, so the output graph structure does not contain only causal relationships, resulting in a decrease in both F1 and precision. We believe that the robustness of our proposed STIC lies in the mechanism-invariance block, which repeatedly verifies the functional causal relationships within each single window, effectively reducing model instability.

\subsubsection{\textbf{Linear Uniform Datasets}\label{LU}}

The linear uniform datasets is generated with observed time steps ($T=1000$) by varied numbers of variables ($d=\{5,10,15,20\}$). For each $X_i$, $f_i$ is set as a linear function, while the noise term $\varepsilon_i$ follows a uniform distribution $U[0,1]$.

The performance of STIC and baselines are shown in Figure \ref{fig4}. STIC outperforms baselines in terms of F1 score and precision in most cases, especially when the number of time series $d$ is large. For constraint-based methods, PCMCI and PCMCI+ perform poorly in terms of F1 score and precision when the number of variables is relatively large ($d=\{10,15,20\}$). We consider that since conditional independence tests serve as strict indicators of causal relationships, they may fail due to the limited number of time steps and the presence of uniform noise. Moreover, PCMCI cannot determine intra-slice causal relationships and performs much worse than our STIC model in terms of F1 score and precision. For score-based methods, DYNOTEARS identifies causal relationships by fitting auto-regressive coefficients between variables, treating them as estimated causal relationships. However, due to the strong influence of noise, DYNOTEARS fails to recognize causal relationships in the linear uniform datasets. Interestingly, TCDF, which shows competitive performance compared to our STIC model in Section \ref{LG}, performs particularly poorly on the linear uniform datasets. From the high precision and low F1 score of TCDF, we can deduce that the uniform distribution introduces many incorrectly estimated causal edges during the process of estimating temporal causality based on Granger causality using past value of other variables. The F1 score and precision of CUTS and CUTS+ further support the idea that Granger causality is not well applicable to linear uniform datasets. One possible reason is that for a uniform distribution, its inverse transformation equation still exists, which leads to the performance degradation of finding causality from correlation.

\subsection{Experiments on Benchmark Datasets\label{benchmark}}

In this section, we utilize FMRI benchmark datasets, a common neuroscientific benchmark dataset called Functional Magnetic Resonance Imaging \citep{smith2011network}, to explore and discover brain blood flow patterns. The dataset contains 28 different underlying brain networks with the number of observed variables ($d=\{5,10,15\}$). For each of the 28 brain networks, we observe 200 time steps for causal discovery. The results are reported in Table \ref{tab_Simulated}.

The results demonstrate that STIC achieves the highest average F1 scores on all kinds of observed variables, surpassing the average F1 scores of PCMCI, PCMCI+, DYNOTEARS, TCDF, CUTS and CUTS+. Moreover, in terms of precision, STIC achieves significantly higher precisions than those of the other baselines. For constraint-based methods, such as PCMCI and PCMCI+, their poor performance on the FMRI datasets may be attributed to the short length of observed time steps, which affects their ability to accurately test for conditional independence. Regarding DYNOTEARS, we believe that acyclicity regularizers still limit its performance. In comparison, our STIC model outperforms TCDF, CUTS and CUTS+ by utilizing a window representation, which enhances the representation of observed data within each window. This enables more accurate learning of common causal features and structures across multiple windows.

\subsection{Ablation Study\label{ablation_study}}




\begin{table}[]
    \centering
    \caption{The results of ablation study on the linear Gaussian datasets with the number of variables ($d=5$).}
    \resizebox{\columnwidth}{!}{%
    \begin{tabular}{c|ccc|ccc|ccc}
    \hline
     & \multicolumn{3}{c|}{Learning Rate} & \multicolumn{3}{c|}{Max Time Lag} & \multicolumn{3}{c}{Threshold} \\
     & lr=$1e^{-4}$ & lr=$1e^{-5}$ & lr=$1e^{-6}$ & $\overline{\tau}=2$ & $\overline{\tau}=3$ & $\overline{\tau}=4$ & $p=0.1$ & $p=0.3$ & $p=0.5$ \\ \hline
    F1 & 0.77$\pm$0.005 & 0.76$\pm$0.008 & 0.78$\pm$0.004 & 0.76$\pm$0.008 & 0.68$\pm$0.004 & 0.60$\pm$0.003 & 0.43$\pm$0.001 & 0.76$\pm$0.008 & 0.80$\pm$0.020 \\
    precision & 0.66$\pm$0.006 & 0.66$\pm$0.013 & 0.68$\pm$0.016 & 0.66$\pm$0.013 & 0.53$\pm$0.005 & 0.45$\pm$0.003 & 0.27$\pm$0.001 & 0.66$\pm$0.013 & 0.89$\pm$0.019 \\ \hline
    \end{tabular}%
    }
    \label{tab_ablation}
\end{table}

We conduct ablation experiments on the linear Gaussian datasets with the number of variables ($d=5$), to investigate the impact of different hyper-parameters on the experimental results, such as the learning rate (default: $1e^{-5}$), the predefined maximum time lag (default: $0.4d=2$), and the threshold $p$ (default: 0.3). Specifically, we vary the learning rate by increasing it to $1e^{-4}$ and decreasing it to $1e^{-6}$. We also increased the predefined maximum lag to $\overline{\tau}=3$ and $\overline{\tau}=4$, respectively, and change the threshold to $p=0.1$ or $p=0.5$. The empirical results are summarized in Table \ref{tab_ablation}.

\begin{itemize}
  \item \textbf{The learning rate lr}: The experiments reveal that manipulating the learning rate, either by increasing or decreasing it, has little effect on the F1 score and precision. This finding suggests that our convolutional neural network architecture is not sensitive to changes in the learning rate, simplifying the parameter tuning process.
  \item  \textbf{The predefined maximum time lag $\overline{\tau}$}: However, increasing the predefined maximum lag $\overline{\tau}$ gradually deteriorates performance. We speculate that this decline occurs because, with a longer lag, the window for observations expands, potentially causing STIC to learn multi-hop causal edges ($X_{i}\stackrel{\tau_1}{\longrightarrow} X_{j}\stackrel{\tau_2}{\longrightarrow} X_{k}$) as single-hop causal edges ($X_i\stackrel{\tau_1+\tau_2}{\longrightarrow} X_{k}$). Addressing this issue could be a focus for future research. 
  \item  \textbf{The threshold $p$}: Furthermore, comparing the default setting to STIC with $p=0.1$, we observe a significant decline in both F1 score and precision when the threshold is lower. When comparing the default setting to STIC with $p=0.5$, we find that while the F1 score remains relatively stable, precision notably improves when the threshold is increased. These findings indicate that reducing the threshold adversely affects the model's ability to explore causal edges, while setting a higher threshold may cause the model to consider nearly all estimated edges as causal, resulting in increased precision but a similar F1 score. Thus, the threshold plays a pivotal role in discovering more causal edges, and a trade-off needs to be made.
\end{itemize}

\section{Discussion}

This study presents two kinds of short-term invariance-based convolutional neural networks for discovery causality  from time-series data. Major findings include: (1) our methods, based on gradients, effectively discover causality from time-series data; 
(2) convolutional neural network based on short-term invariance improves the sample efficiency of causal discovery. (3) our proposed STIC demonstrates significantly superior performance compared to baseline causal discovery algorithms. In this section, we discuss these results in detail.

\subsection{What contributes to the effectiveness of STIC?}

\subsubsection{Why can STIC find causal relationships}
Numerous gradient-based methods have been developed, such as DYNOTEARS within score-based approaches \citep{pamfil2020dynotears}, and TCDF \citep{nauta2019causal}, CUTS\citep{cheng2022cuts} and CUTS+\citep{cheng2023cuts+} within Granger-based approaches. Including our proposed STIC, these gradient-based methods aim to optimize estimated causal matrix by maximizing or minimizing constrained functions. With the rapid advancement and widespread adoption of deep Neural Networks (NNs), researchers have begun employing NNs to infer nonlinear Granger causality, demonstrating the effectiveness of gradient-based methods in causal discovery \citep{tank2021neural,wu2021granger,khanna2019economy}. In our approach, we maintain the assumption and the constrained functions of Granger causality so that our method remains effective in discovering causal relationships.

\subsubsection{Why can STIC find the true causality} As time progresses, the values of observed variables change due to statistical shifts in distributions. However, the causal relationships between the variables remain the same. For example, carbohydrate intake may lead to an increase in blood glucose, but the specific magnitude of the increase may vary with covariates such as body weight. The ``lead'' property is used as an indicator of causal relationships, i.e., invariance \citep{magliacane2018domain,rojas2018invariant,santos2021domain,li2021revisiting}. In this paper, we observe that some causal relationships may also vary over time. Therefore, we make a more reasonable assumption, namely short-term time invariance and mechanism invariance \citep{entner2010causal,liu2023causal,zhang2017causal}. Building on these two forms of short-term invariance, we posit that both the window causal matrix $\mathcal{W}$ and the transform functions $f$ remain unchanged in the short term. For example, within a few days (short-term), since covariates affecting blood glucose levels, such as body weight, remain nearly constant, the increase in blood glucose levels due to carbohydrate intake is also essentially constant. The short-term mechanism invariance proposed in this paper is also considered an invariant principle \citep{liu2022towards}. Building on these forms of invariance, a natural extension is the introduction of parallel time-invariance and mechanism-invariance blocks for joint training, as proposed in this paper. Through the theoretical validation of convolution in Section \ref{Necessity_of_convolution}, we further affirm the applicability of convolution to causal discovery from time-series data. Additionally, the Granger causality is commonly employed to examine short-term causal relationships \citep{ahmad2005indian}, which further aligns with our assumptions. Under the premise of theoretical soundness and practical applicability, our STIC framework proves highly effective.

\subsection{What contributes to the exceptional
performance of STIC?}

\subsubsection{High F1 scores and precisions} The experiments conducted on both synthetic and FMRI benchmark datasets in Section \ref{Experiment_Results} demonstrate that our STIC model achieves the state-of-the-art F1 scores and precisions in most cases. We attribute the performance improvement to the incorporation of the window representation, the time-invariance block, and the mechanism-invariance block. The window representation serves as a form of data augmentation and aggregation, providing a macroscopic understanding of common features across multiple window observations, thereby facilitating the learning of more accurate causal structures. The time-invariance block extracts common features from multiple window observations and achieves effective information aggregation, enhancing sample efficiency and enabling the model to achieve high performance. The mechanism-invariance block, with nested convolution kernels, iteratively examines the functional transform within each individual window, enabling complex nonlinear transformations. With improved accuracy in both causal structures and complex nonlinear transformations, STIC demonstrates exceptional performance.

\subsubsection{High sample efficiency} The window representation, introduced in Section \ref{Window_Representation}, facilitates the segmentation of the entire observed dataset $\mathcal{X}\in\mathbb{R}^{d\times T}$ into $c=T-\overline{\tau}-1$ partitions, leveraging only a predefined hyperparameter $\overline{\tau}$. Each window observation $W_\psi$, where $\psi=1,...,c$, is ensured to satisfy both short-term time invariance and mechanism invariance simultaneously. This representation method, similar to batch training techniques \citep{liang2006fast,li2014efficient,hong2020graph} optimizes data utilization, thus enhancing sample efficiency. Moreover, another pivotal aspect contributing to sample efficiency is the novel invariance-based convolutional neural network design. This architecture enables the extraction of richer causal structure features from limited observed data, facilitating more accurate exploration of potential causal relationships even with a limited length of observed time steps. Consequently, our STIC model effectively tackles the challenge of causal discovery in low-sample scenarios, thereby improving sample efficiency.

\section{Conclusion, limitations and Future Works}

This paper introduces STIC, a novel method designed for causal discovery from time-series data by leveraging both short-term time invariance and mechanism invariance. STIC employs sliding windows in conjunction with convolutional neural networks to incorporate these two kinds of invariance, and then transforms the searching for window causal matrix into a continuous auto-regressive problem. The compatibility between causal structures in time series and convolutional neural networks is supported by our theoretical analysis, reinforcing the rationale behind STIC's design. Our experimental results on synthetic and benchmark datasets demonstrate the efficiency and stability of STIC, particularly when dealing with datasets that have shorter lengths of observed time steps. It showcases the effectiveness of the short-term invariance-based approach in capturing temporal causal structures.

However, STIC has certain limitations that require further investigation. Firstly, while STIC demonstrates effectiveness under this assumption, it becomes constrained when faced with non-additive noise. Future research should aim to develop more comprehensive approaches capable of handling various types of non-additive noise. Secondly, the manual predefined maximum lag used in STIC may pose limitations. Ablation experiments indicate that this hyperparameter setting can lead to the model learning multi-hop causal edges as single-hop causal edges. To overcome this, future research should explore more advanced blocks, such as attention mechanisms, to further enhance the performance of STIC.

In summary, STIC represents a promising research direction for addressing the challenges of causal discovery from time-series data, and we hope that STIC can discover new causal knowledge and provide new research ideas for medical and other fields.

\section*{Acknowledgments}

This study was supported in part by a grant from the National Key Research and Development Program of China [2021ZD0110900] and the National Natural Science Foundation of China [72293584].

\bibliographystyle{unsrtnat}
\bibliography{references}

\end{document}